\newtheorem{definition}{Definition}
\newtheorem{proposition}[definition]{Proposition}
\newtheorem{theorem}[definition]{Theorem}
\newtheorem{lemma}[definition]{Lemma}
\newcommand{\diam}{\operatorname{diam}}
\newcommand{\E}[1]{\mathbb{E}\left[#1\right]}
\newcommand{\EE}[2]{\mathbb{E}_{#1}\left[#2\right]}
\newcommand{\R}{\mathbb{R}}
\newcommand{\D}{\mathcal{D}}
\newcommand{\1}[1]{\mathbb{I}\left\{#1\right\}}
\renewcommand{\exp}[1]{\operatorname{exp}\left(#1\right)}
\renewcommand{\P}[1]{\mathbb{P}\left(#1\right)}
\icmltitlerunning{Random Forests In Theory and In Practice}
\begin{document}
\twocolumn[
\icmltitle{Narrowing the Gap: Random Forests\\In Theory and In Practice}

\icmlauthor{Misha Denil}{mdenil@cs.ubc.ca}
\icmlauthor{David Matheson}{davidm@cs.ubc.ca}
\icmladdress{University of British Columbia, Canada}
\icmlauthor{Nando de Freitas}{nando@cs.ox.ac.uk}
\icmladdress{Oxford University, United Kingdom}

\icmlkeywords{random forests, ensembles, consistency, kinect}

\vskip 0.3in
]

\begin{abstract}
  Despite widespread interest and practical use, the theoretical properties of
  random forests are still not well understood.  In this paper we contribute to
  this understanding in two ways.  We present a new theoretically tractable
  variant of random regression forests and prove that our algorithm is
  consistent.  We also provide an empirical evaluation, comparing our algorithm
  and other theoretically tractable random forest models to the random forest
  algorithm used in practice.  Our experiments provide insight into the relative
  importance of different simplifications that theoreticians have made to obtain
  tractable models for analysis.
\end{abstract}

\section{Introduction}

Random forests are a type of ensemble method which makes predictions by
averaging over the predictions of several independent base models.  Since
its introduction by \citet{Breiman2001:random_forests} the random forests
framework has been extremely successful as a general purpose classification and
regression method.

Despite their widespread use, a gap remains between the theoretical
understanding of random forests and their practical use.  A variety of random
forest algorithms have appeared in the literature, with great practical success.
However, these algorithms are difficult to analyze, and the basic mathematical
properties of even the original variant are still not well
understood~\cite{Biau2012}.

This state of affairs has led to a polarization between theoretical and
empirical contributions to the literature.  Empirically focused papers describe
elaborate extensions to the basic random forest framework, adding domain
specific refinements which push the state of the art in performance, but come
with no guarantees
\cite{Schroff2008,Shotton2011,Montillo2011,Xiong2012,Zik2012}.
In contrast, theoretical papers focus on simplifications of the standard
framework where analysis is more tractable.  Notable contributions in this
direction are the recent papers of \citet{biau08} and \citet{Biau2012}.

In this paper we present a new variant of random regression forests with
tractable theory, which relaxes two of the key simplifying assumptions from
previous works.  We also provide an empirical comparison between standard random
forests and several models which have been analyzed by the theory community.
Our algorithm achieves the closest match between theoretically tractable models
and practical algorithms to date, both in terms of similarity of the algorithms
and in empirical performance.

Empirical comparison of the theoretical models, something which has not
previously appeared in the literature, provides important insight into the
relative importance of the different simplifications made to the standard
algorithm to enable tractable analysis.


\section{Related work}

Random forests~\cite{Breiman2001:random_forests} were originally conceived as a
method of combining several CART~\cite{breiman84} style decision trees using
bagging~\cite{breiman96}.  Their early development was influenced by the random
subspace method of \citet{ho98}, the approach of random split selection from
\citet{dietterich2000} and the work of \citet{amit1997} on feature selection.
Several of the core ideas used in random forests were also present in the early
work of \citet{kwokt88} on ensembles of decision trees.

In the years since their introduction, random forests have grown from a single
algorithm to an entire framework of models~\cite{criminisi2011decision}, and
have been applied to great effect in a wide variety of fields~\cite{Svetnik2003,
  Prasad2006, Cutler2007, Shotton2011, criminisi2013}.

In spite of the extensive use of random forests in practice, the mathematical
forces underlying their success are not well understood.  The early theoretical
work of \citet{Breiman2004} for example, is essentially based on intuition and
mathematical heuristics, and was not formalized rigorously until quite
recently \cite{Biau2012}.

There are two main properties of theoretical interest associated with random
forests.  The first is consistency of estimators produced by the algorithm which
asks (roughly) if we can guarantee convergence to an optimal estimator as the
data set grows infinitely large.  Beyond consistency we are also interested in
rates of convergence.  In this paper we focus on consistency, which,
surprisingly, has not yet been established even for Breiman's original
algorithm.

Theoretical papers typically focus on stylized versions of the algorithms used
in practice.  An extreme example of this is the work of \citet{genuer2010risk,
  genuer2012}, which studies a model of random forests in one dimension with
completely random splitting.  In exchange for simplification researchers
acquire tractability, and the tact assumption is that theorems proved for
simplified models provide insight into the properties of their more
sophisticated counterparts, even if the formal connections have not been
established.

An important milestone in the development of the theory of random forests is the
work of \citet{biau08}, which proves the consistency of several randomized
ensemble classifiers.  Two models studied in \citet{biau08} are direct
simplifications of the algorithm from \citet{Breiman2001:random_forests}, and
two are simple randomized neighborhood averaging rules, which can be viewed as
simplifications of random forests from the perspective of \citet{Lin2002}.

More recently \citet{Biau2012} has analyzed a variant of random forests
originally introduced in \citet{Breiman2004} which is quite similar to the
original algorithm.  The main differences between the model in \citet{Biau2012}
and that of \citet{Breiman2001:random_forests} are in how candidate split points
are selected and that the former requires a second independent data set to fit
the leaf predictors.

While the problem of consistency of Breiman's algorithm remains open, some
special cases have proved tractable.  In particular, \citet{Meinshausen2006} has
shown that a model of random forests for quantile regression is consistent and
\citet{Ishwaran10} have shown the consistency of their survival forests model.
\citet{denil2013consistency} have shown the consistency of an online version of
random forests.


\section{Random Forests}
\label{sec:random-forests}

In this section we briefly review the random forests framework.  For a more
comprehensive review we refer the reader to \citet{Breiman2001:random_forests}
and \citet{criminisi2011decision}.

Random forests are built by combining the predictions of several trees, each of
which is trained in isolation.  Unlike in boosting \cite{schapire12} where the
base models are trained and combined using a sophisticated weighting scheme,
typically the trees are trained independently and the predictions of the trees
are combined through averaging.

There are three main choices to be made when constructing a random tree.  These
are (1) the method for splitting the leafs, (2) the type of predictor to use in
each leaf, and (3) the method for injecting randomness into the trees.

Specifying a method for splitting leafs requires selecting the shapes of
candidate splits as well as a method for evaluating the quality of each
candidate.  Typical choices here are to use axis aligned splits, where data are
routed to sub-trees depending on whether or not they exceed a threshold value in
a chosen dimension; or linear splits, where a linear combination of features are
thresholded to make a decision.  The threshold value in either case can be
chosen randomly or by optimizing a function of the data in the leafs.

In order to split a leaf, a collection of candidate splits are generated and a
criterion is evaluated to choose between them.  A simple strategy is to choose
among the candidates uniformly at random, as in the models analyzed in
\citet{biau08}.  A more common approach is to choose the candidate split which
optimizes a purity function over the leafs that would be created.  Typical
choices here are to maximize the information gain, or the Gini gain
\cite{Hastie2013}.


The most common choice for predictors in each leaf is to use the average
response over the training points which fall in that leaf.
\citet{criminisi2011decision} explore the use of several different leaf
predictors for regression and other tasks, but these generalizations are beyond
the scope of this paper.  We consider only simple averaging predictors here.

Injecting randomness into the tree construction can happen in many
ways.  The choice of which dimensions to use as split candidates at
each leaf can be randomized, as well as the choice of coefficients for
random combinations of features.  In either case, thresholds can be
chosen either randomly or by optimization over some or all of the data
in the leaf.

Another common method for introducing randomness is to build each tree using a
bootstrapped or sub-sampled data set.  In this way, each tree in the forest is
trained on slightly different data, which introduces differences between the
trees.


\section{Algorithm}
\label{sec:algorithm}

In this section we describe the workings of our random forest algorithm.  Each
tree in the random regression forest is constructed independently.  Unlike
the random forests of \citet{Breiman2001:random_forests} we do not preform
bootstrapping between the different trees.

\subsection{Tree construction}

Each node of the tree corresponds to a rectangular subset of $\R^D$, and at each
step of the construction the cells associated with leafs of the tree form a
partition of $\R^D$.  The root of the tree corresponds to all of $\R^D$.  At
each step of the construction a leaf of the tree is selected for expansion.

In each tree we partition the data set randomly into two parts, each of which
plays a different role in the tree construction.  We refer to points assigned to
the different parts as \emph{structure} and \emph{estimation} points
respectively.

\textbf{Structure points} are allowed to influence the shape of the tree.  They
are used to determine split dimensions and split points in each internal node of
the tree.  However, structure points are not permitted to effect the predictions
made in the tree leafs.

\textbf{Estimation points} play the dual role.  These points are used to fit the
estimators in each leaf of the tree, but have no effect on the shape of the tree
partition.

The data are randomly partitioned in each tree by assigning each point to the
structure or estimation part with equal probability.  This partition is required
to ensure consistency; however, there is no reason we cannot have additional
parts.  For instance, we could assign some points to a third, ignored part of
the partition in order to fit each tree on a subset of the data.  However, we
found that subsampling generally hurts performance, so we do not pursue this
idea further.

\subsection{Leaf expansion}

\begin{figure}[t]
  \centering
  \includegraphics[width=1\linewidth]{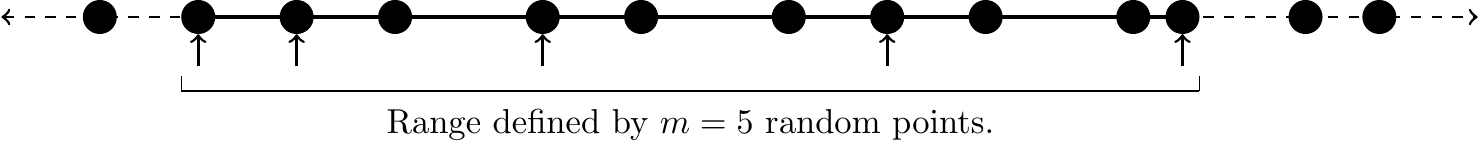}
  \caption{The search range in each candidate dimension is defined by choosing
    $m$ random structure points (indicated by arrows) and searching only over
    the range defined by those points.  Candidate split points can only be
    selected in the region denoted by the solid line; the dashed areas are not
    eligible for splitting.}
  \label{fig:split-points}
\end{figure}

When a leaf is selected for expansion we select, at random,
$\min(1+\operatorname{Poisson}(\lambda), D)$ distinct candidate dimensions.  We
choose a split point for the leaf by searching over the candidate split points
in each of the candidate dimensions.  

A key difference between our algorithm and standard random forests is how we
search for split points in candidate dimensions.  In a standard random forest
points are projected into the candidate dimension and every possible split point
is evaluated as a candidate split point.  In our algorithm we restrict the range
of the search by first selecting $m$ of the structure points in the leaf and
evaluating candidate split points only over the range defined by these points.
Restricting the range in this way forces the trees to be (approximately)
balanced, and is depicted in Figure~\ref{fig:split-points}.

For each candidate split point $S$ we compute the reduction in squared error,
\begin{align*}
  \operatorname{Err}(L) &= \frac{1}{N^s(L)}\sum_{\substack{Y_j\in L\\I_j=s}}
  (Y_j-\bar{Y}^L)^2
  \\
  I(S) &= \operatorname{Err}(A) - \operatorname{Err}(A') - \operatorname{Err}(A'')
\end{align*}
where $A$ is the leaf to be split, $A'$ and $A''$ are the two children which
would be created by splitting $A$ at $S$.  The notation $\bar{Y}^A$ denotes the
empirical mean of the structure points falling in the cell $A$ and $N^s(A)$
counts the number of structure points in $A$.  The variables $I_j\in\{e,s\}$ are
indicators which denote whether the point $(X_j,Y_j)$ is a structure or
estimation point.

The split point is chosen as the candidate which maximizes $I(S)$ without
creating any children with fewer than $k_n$ estimation points, where $n$ denotes
the size of the training set.  If no such candidate is found then expansion is
stopped.

\subsection{Prediction}

Once the forest has been trained it can be used to make predictions for new
unlabeled data points.  To make a prediction for a query point $x$, each tree
independently predicts
\begin{align*}
  f_n^j(x) = \frac{1}{N^e(A_n(x))} \sum_{\substack{Y_i \in A_n(x)\\I_i=e}} Y_i
\end{align*}
and the forest averages the predictions of each tree
\begin{align*}
  f_n^{(M)}(x) = \frac{1}{M} \sum_{j=1}^M f_n^j(x)
\end{align*}
Here $A_n(x)$ denotes the leaf containing $x$ and $N^e(A_n(x))$ denotes the
number of estimation points it contains.  Note that the predictions made by each
tree depend only on the estimation points in that tree; however, since points
are assigned to the structure and estimation parts independently in each tree,
structure points in one tree have the opportunity to contribute to the
prediction as estimation points in another tree.


\section{Consistency}

In this section we prove consistency of the random regression forest model
described in this paper.  We denote a tree partition created by our algorithm
trained on data $\D_n = \{(X_i,Y_i)\}_{i=1}^n$ as $f_n$.  As $n$ varies we
obtain a sequence of classifiers and we are interested in showing that the
sequence $\{f_n\}$ is consistent as $n\to\infty$.  More precisely,
\begin{definition}
  A sequence of estimators $\{f_n\}$ is consistent for a certain distribution on
  $(X,Y)$ if the value of the risk functional
  \begin{align*}
    R(f_n) &= \EE{X,Z,\D_n}{|f_n(X, Z, \D_n) - f(X)|^2}
  \end{align*}
  converges to $0$ as $n\to\infty$, where $f(x) = \E{Y|X=x}$ is the (unknown)
  regression function.
\end{definition}

In order to show that our random forest classifier is consistent, we will take
advantage of its structure as an empirical averaging estimator.
\begin{definition}
  A (randomized) empirical averaging estimator is an estimator that averages a
  fixed number of (possibly dependent) base estimators, i.e.\
  \begin{align*}
    f_n^{(M)}(x,Z^{(M)},\D_n) = \frac{1}{M}\sum_{j=1}^M f_n(x, Z^j,\D_n)
  \end{align*}
  where $Z^{(M)} = (Z^1, \ldots, Z^M)$ is composed of $M$ (possibly dependent)
  realizations of $Z$.
\end{definition}

The first step of our construction is to show that the consistency of the random
regression forest is implied by the consistency of the trees it is composed of.
The following proposition makes this assertion precise.  A similar result was
shown by \citet{biau08} for binary classifiers and a corresponding mutli-class
generalization appears in \citet{denil2013consistency}.  For regression, it is
particularly straightforward.

\begin{proposition}
  Suppose $\{f_n\}$ is a sequence of consistent estimators.  Then
  $\{f_n^{(M)}\}$, the sequence of empirical averaging estimators obtained by
  averaging $M$ copies of $\{f_n\}$ with different randomizing variables is also
  consistent.
  \label{prop:consistent-base}
\end{proposition}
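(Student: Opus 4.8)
The plan is to exploit the convexity of the squared-error loss together with the fact that $f_n^{(M)}$ is, by construction, an arithmetic mean of $M$ copies of the base estimator. First I would fix the query point $X$, the data $\D_n$, and the collection of randomizing variables $Z^{(M)}$, and rewrite the pointwise deviation of the averaged estimator from the regression function as an average of the individual deviations,
\begin{align*}
  f_n^{(M)}(X,Z^{(M)},\D_n) - f(X)
  = \frac{1}{M}\sum_{j=1}^M \bigl(f_n(X,Z^j,\D_n) - f(X)\bigr).
\end{align*}
Since $t\mapsto t^2$ is convex, Jensen's inequality applied to this average yields the pointwise bound
\begin{align*}
  \bigl|f_n^{(M)}(X,Z^{(M)},\D_n) - f(X)\bigr|^2
  \le \frac{1}{M}\sum_{j=1}^M \bigl|f_n(X,Z^j,\D_n) - f(X)\bigr|^2.
\end{align*}

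Next I would take the expectation over $X$, $Z^{(M)}$, and $\D_n$ of both sides. By linearity of expectation the right-hand side becomes an average of the $M$ individual risks. Although the $Z^j$ are only assumed to be \emph{possibly dependent} realizations of $Z$, each $Z^j$ has the same marginal law as $Z$, so every summand satisfies $\EE{X,Z^j,\D_n}{|f_n(X,Z^j,\D_n)-f(X)|^2} = R(f_n)$. Hence
\begin{align*}
  R(f_n^{(M)}) \le \frac{1}{M}\sum_{j=1}^M R(f_n) = R(f_n).
\end{align*}
Because $\{f_n\}$ is consistent, $R(f_n)\to 0$ as $n\to\infty$; since the risk is nonnegative, this forces $R(f_n^{(M)})\to 0$, which is exactly the claimed consistency of $\{f_n^{(M)}\}$.

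I do not expect a genuine obstacle here, and indeed the result is essentially immediate once the convexity bound is in place. The only point that deserves care is the role of the dependence among the $Z^j$. One might be tempted to argue through a bias--variance decomposition, in which case the dependence would complicate the variance term and independence (or at least weak correlation) would seem to be needed. The convexity argument sidesteps this entirely: it bounds the risk of the average by the common risk of a single base estimator term by term, so the joint distribution of $Z^{(M)}$ never enters and the dependence is harmless. Dependence among the $Z^j$ would matter only if one wished to quantify the \emph{improvement} obtained from averaging, that is, rates of convergence, which this proposition does not address.
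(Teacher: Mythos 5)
Your proposal is correct and follows essentially the same route as the paper: the paper's inequality $(\sum_{i=1}^n a_i)^2 \le n\sum_{i=1}^n a_i^2$ is exactly your Jensen/convexity bound on the squared average, after which both arguments take expectations and use that each $Z^j$ has the marginal law of $Z$ to identify every summand with $R(f_n)$. Your explicit remark that the joint dependence among the $Z^j$ never enters is a nice clarification of a point the paper leaves implicit, but it is the same proof.
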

\begin{proof}
  We must show that $R(f_n^{(M)}) \to 0$.  Compute
  \begin{align*}
    &R(f_n^{(M)}) = 
    \\&\hspace{0.5cm}
    \EE{X,Z^{(M)},\D_n}{| \frac{1}{M} \sum_{j=1}^Mf_n(x,
      Z^j,\D_n) - f(x)|^2}
  \end{align*}
  by the triangle inequality and the fact that $(\sum_{i=1}^na_i)^2 \le
  n\sum_{i=1}^na_i^2$,
  \begin{align*}
    &\le \frac{1}{M} \sum_{j=1}^M \EE{X,Z^{j},\D_n}{| f_n(x, Z^j,\D_n) -
      f(x)|^2}
    \\
    &= R(f_n) \to 0
  \end{align*}
  which is the desired result.
\end{proof}

Proposition~\ref{prop:consistent-base} allows us to focus our attention on the
consistency of each of the trees in the regression forest.  The task of proving
the tree estimators are consistent is greatly simplified if we condition on the
partition of the data into structure and estimation points.  Conditioned on the
partition, the shape of the tree becomes independent of the estimators in the
leafs.  The following proposition shows that, under certain conditions, proving
consistency conditioned on the partitioning variables is sufficient.

\begin{proposition}
  Suppose $\{f_n\}$ is a sequence of estimators which are conditionally
  consistent for some distribution on $(X,Y)$ based on the value of some
  auxiliary variable $I$.  That is,
  \begin{align*}
    \lim_{n\to\infty}\EE{X,Z,\D_n}{|f_n(X,Z,I,\D_n)-f(x)|^2 \,|\, I} = 0
  \end{align*}
  for all $I\in\mathcal{I}$ and that $\nu$ is a distribution on $I$.  Moreover,
  suppose $f(x)$ is bounded.  If these conditions hold and if $\nu(\mathcal{I})
  = 1$ and each $f_n$ is bounded with probability 1, then $\{f_n\}$ is
  unconditionally consistent, i.e.\ $R(f_n) \to 0$.
  \label{prop:condition-on-full-measure}
\end{proposition}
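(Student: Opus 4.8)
The plan is to express the unconditional risk as an average over the auxiliary variable $I$ of the conditional risks, and then exchange the limit with this average via the dominated convergence theorem. First I would introduce the conditional risk $r_n(I) = \EE{X,Z,\D_n}{|f_n(X,Z,I,\D_n)-f(X)|^2 \mid I}$ and observe, by the tower property of conditional expectation, that $R(f_n) = \EE{I}{r_n(I)}$, where the outer expectation is taken with respect to $\nu$.

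The conditional consistency hypothesis states precisely that $r_n(I)\to 0$ for every $I\in\mathcal{I}$. Since $\nu(\mathcal{I})=1$, this is pointwise convergence $r_n(I)\to 0$ for $\nu$-almost every $I$. The remaining task is to justify passing this limit through the integral $\EE{I}{r_n(I)}$, which is exactly the content of the dominated convergence theorem.

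To apply dominated convergence I need a $\nu$-integrable function that dominates $r_n(I)$ uniformly in $n$, and this is where the boundedness hypotheses enter. Writing $|f|\le B_f$ and using that each $f_n$ is bounded with probability $1$ by a common constant $B$, the integrand satisfies $|f_n(X,Z,I,\D_n)-f(X)|^2 \le (B+B_f)^2$ with probability $1$; taking the conditional expectation given $I$ yields $r_n(I)\le (B+B_f)^2$ for all $n$. Because $\nu$ is a probability measure the constant $(B+B_f)^2$ is $\nu$-integrable, so it serves as the required dominating function. Dominated convergence then gives $\lim_{n\to\infty}R(f_n) = \lim_{n\to\infty}\EE{I}{r_n(I)} = \EE{I}{\lim_{n\to\infty}r_n(I)} = \EE{I}{0} = 0$, which is the claim.

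I expect the only real subtlety to be securing the dominating function: the argument needs the bound on $f_n$ to hold with a constant that does not grow with $n$, so that $(B+B_f)^2$ genuinely dominates every $r_n$. The hypotheses that $f$ is bounded and that each $f_n$ is bounded with probability $1$ are included for exactly this purpose, and translating the stated per-$n$ boundedness into a usable uniform bound is the step I would be most careful about. A minor accompanying point is to confirm that $r_n$ is a measurable function of $I$, so that the outer expectation and the invocation of dominated convergence are well defined; this is routine.
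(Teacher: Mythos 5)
Your proposal is correct and follows essentially the same route as the paper's proof: decompose the risk via the tower property as $R(f_n)=\EE{I}{r_n(I)}$, then pass the limit inside using the dominated convergence theorem, with the boundedness of $f$ and of the $f_n$ supplying a constant dominating function. If anything, you are slightly more careful than the paper about the one genuine subtlety, namely that the dominating bound on $f_n$ must be uniform in $n$ for dominated convergence to apply.
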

\begin{proof}
  Note that
  \begin{align*}
    R(m_n) &= \EE{X,Z,I,\D_n}{|f_n(X, Z, I, \D_n) - f(X)|^2}
    \\
    &= \EE{I}{\EE{X,Z,\D_n}{|f_n(X, Z, I, \D_n) - f(X)|^2 \,|\,I}}
  \end{align*}
  and
  \begin{align*}
    &\EE{X, Z, I, \D_n}{|f_n(X, Z, I, \D_n) - f(X)|^2}
    \\
    &\le \EE{X,Z,I,\D_n}{|f_n(X, Z, I, \D_n)|^2} + \EE{X}{|f(X)|^2}
    \\
    &\le \sup_x \EE{Z,I,\D_n}{|f_n(x, Z, I, \D_n)|^2} + \sup_x |f(x)|^2
  \end{align*}
  Both of these terms are finite by the boundedness assumptions.  This means we
  can apply the dominated convergence theorem to obtain
  \begin{align*}
    &\lim_{n\to\infty} R(f_n) = \\&\EE{I}{\lim_{n\to\infty} \EE{X,Z,\D_n}{
        |f_n(X, Z, \D_n) - f(X)|^2\,|\,I}} = 0
  \end{align*}
  which is the desired result.
\end{proof}

With these preliminary results in hand, we are equipped to prove our main
result.

\begin{theorem}
  Suppose that $X$ is supported on $\R^D$ and has a density which is bounded
  from above and below.  Moreover, suppose that $f(x)$ is bounded and that
  $\E{Y^2} < \infty$.  Then the random regression forest algorithm described in
  this paper is consistent provided that $k_n\to\infty$ and $k_n/n \to 0$ as
  $n\to\infty$.
\end{theorem}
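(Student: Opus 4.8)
The plan is to reduce the theorem to a statement about a single tree and then verify the standard sufficient conditions for consistency of a partitioning estimate. First I would apply Proposition~\ref{prop:consistent-base} to pass from the averaged forest $f_n^{(M)}$ to an individual tree $f_n$. Then I would apply Proposition~\ref{prop:condition-on-full-measure} with the auxiliary variable $I$ taken to be the assignment of the sample into structure and estimation parts. The boundedness hypotheses needed there are easily checked: $f$ is bounded by assumption, and each tree prediction is an average of finitely many $Y_i$, hence finite almost surely, with $\sup_x \EE{}{|f_n(x)|^2} \le \E{Y^2} < \infty$ by Jensen's inequality. It therefore suffices to prove that a single tree is consistent \emph{conditioned} on the structure/estimation partition.

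Conditioning on this partition is exactly what makes the tree tractable: the cells $A_n(x)$ depend only on the structure points and the randomizing variable $Z$, while each leaf value averages estimation points that, given the cells, constitute an independent sample from the distribution conditioned on the corresponding cell. For such \emph{honest} data-dependent partitions I would invoke the classical $L^2$-consistency theorem for partitioning regression estimates (in the spirit of \citet{biau08} and the nonparametric regression theory of Gy\"orfi et al.), which guarantees consistency, under $\E{Y^2}<\infty$, once two conditions hold: (i) the cells shrink, $\diam(A_n(X)) \to 0$ in probability, and (ii) the cells retain enough data, $N^e(A_n(X)) \to \infty$ in probability. Here the structure/estimation split is essential, since it decouples the shape of $A_n(X)$ from the averaged responses and lets the bias and variance be analysed separately; the bias term vanishes from (i) because $f$ is bounded (hence in $L^2$), and the variance term vanishes from (ii). Condition (ii) is immediate from the stopping rule, which forbids any split that would leave a child with fewer than $k_n$ estimation points, so every leaf contains at least $k_n \to \infty$ of them.

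The real content is condition (i), and this is where I expect the main difficulty. I would prove it one coordinate at a time, showing that the side length $V_d(A_n(X))$ of the cell containing a test point tends to $0$ in probability for each $d$, after which a union bound over the $D$ coordinates gives $\diam(A_n(X)) \to 0$. Three ingredients must be combined. First, the depth of the path from the root to $A_n(X)$ must grow without bound: because candidate split points are confined to the range spanned by $m$ structure points, the splits are forced to be approximately balanced, so the number of points in a cell decays geometrically with depth; since $k_n/n \to 0$, reaching a leaf forces the depth to infinity. Second, each coordinate must be cut infinitely often along this path, which I would obtain from the randomized dimension selection: with probability at least $e^{-\lambda}/D>0$ a given coordinate $d$ is the \emph{only} candidate at a node and must then be the split coordinate, so a Borel--Cantelli / law-of-large-numbers argument along the growing path forces the number of cuts in coordinate $d$ to infinity. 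Third, each such cut must contract $V_d$; using that the density is bounded above and below, I would compare the placement of the $m$ structure points to the uniform case and show that, with probability bounded away from $0$, both children are a constant fraction smaller in coordinate $d$, so the side length containing $X$ shrinks by a factor bounded away from $1$ irrespective of which child $X$ falls into. Chaining these contractions over the infinitely many cuts yields $V_d \to 0$.

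The delicate point---and the main obstacle---is the interaction between the \emph{greedy} split rule and the requirement that \emph{every} coordinate eventually be cut: a split chosen to maximize $I(S)$ could in principle keep favouring a few coordinates, so I cannot control the cuts through the optimization itself. My strategy is to deliberately discard all information about the greedy criterion and rely solely on the structural randomness. The sole-candidate events above guarantee cuts in each coordinate no matter which split the criterion would prefer, and the two-sided density bound guarantees that these forced cuts contract the cell geometrically rather than shaving off vanishing slivers. Verifying that these two effects can be made quantitative and uniform enough to drive $\diam(A_n(X)) \to 0$ is the crux of the argument; once it is in place, conditions (i) and (ii) together with Propositions~\ref{prop:condition-on-full-measure} and \ref{prop:consistent-base} complete the proof.
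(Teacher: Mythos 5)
Your proposal follows essentially the same route as the paper's proof: reduce the forest to a single tree via Proposition~\ref{prop:consistent-base}, condition on the structure/estimation partition via Proposition~\ref{prop:condition-on-full-measure}, invoke Theorem 4.1 of Gy\"orfi et al.\ so that it suffices to prove $N^e(A_n(X))\to\infty$ (immediate from the $k_n$ stopping rule) and $\diam(A_n(X))\to 0$, and obtain the latter exactly as the paper does---by ignoring the greedy criterion and using the sole-candidate-dimension event (probability $e^{-\lambda}/D$) together with the contraction forced by the $m$-structure-point range restriction, plus the fact that the depth tends to infinity because $k_n/n\to 0$ (the paper's Proposition~\ref{prop:k-to-inf}). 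The only differences are bookkeeping: the paper first uses monotone-transformation invariance to reduce to uniform marginals and then iterates a single expected per-split contraction bound $\left(1-\frac{e^{-\lambda}}{2D(m+1)}\right)^K$ via Lemma~\ref{lemma:Evstar}, whereas you invoke the two-sided density bounds directly and chain per-cut contractions with a Borel--Cantelli argument, which amounts to the same estimate.
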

\begin{proof}
  Since the construction of the tree is monotone transformation invariant we can
  assume without loss of generality that $X$ is supported on $[0,1]^D$ with
  uniform marginals \cite{devroye96}.

  By Proposition~\ref{prop:consistent-base} it is sufficient to show consistency
  of the base estimator.  Moreover, using $I$ to denote an infinite sequence of
  partitioning variables, by Proposition~\ref{prop:condition-on-full-measure} it
  is sufficient to show consistency of the base estimator conditioned on $I$.
  To this end, we appeal to Theorem 4.1 from \citet{gyorfi02}.  According to
  this theorem $\{f_n\}$ is consistent if both $\diam(A_n(X)) \to 0$ and
  $N^e(A_n(X))\to\infty$ in probability.  The diameter of a set is defined as
  \begin{align*}
    \diam(A) = \sup_{x, y \in A} \|x-y\|
    \enspace.
  \end{align*}

  Consider a tree partition defined by the structure points (fixed by
  conditioning on $I$) and the additional randomizing variable $Z$.  That
  $N^e(A_n(X))\to\infty$ is trivial, since $N^e(A_n(X)) \ge k_n$.  To see that
  $\diam(A_n(X)) \to 0$ in probability, let $V_n(x)$ be the size of the first
  dimension of $A_n(x)$.  It suffices to show that $\E{V_n(x)} \to 0$ for all
  $x$ in the support of $X$.

  Let $X_1,\ldots,X_{m'} \sim \mu|_{A_n(x)}$ for some $1\le m' \le m$ denote the
  structure points selected to determine the range of the split points in the
  cell $A_n(x)$.  Without loss of generality, we can assume that $V_n(x) = 1$
  and that $\pi_1 X_i \sim \operatorname{Uniform}[0,1]$, where $\pi_1$ is a
  projection onto the first coordinate.  Conditioned on the event that the first
  dimension is cut, the largest possible size for the first dimension of the
  child cells is bounded by
  \begin{align*}
    V^* = \max(\max_{i=1}^m \pi_1X_i, 1-\min_{i=1}^m \pi_1X_i)
  \end{align*}
  Recall that we choose $\min(1+\operatorname{Poisson}(\lambda), D)$ distinct
  candidate split dimensions, and define the following events
  \begin{align*}
    E_1 &= \{\text{There is exactly one candidate dimension}\}
    \\
    E_2 &= \{\text{The first dimension is a candidate}\}
  \end{align*}
  Then, using $V'$ to denote the size of the first dimension of the child cell,
  \begin{align*}
    \E{V'} &\le \E{\1{(E_1\cap E_2)^c} + \1{E_1\cap E_2}V^*}
    \\
    &= \P{E_1^c} + \P{E_2^c|E_1}\P{E_1} \\&\phantom{=}\hspace{0.5cm}+
    \P{E_2|E_1}\P{E_1}\E{V^*}
    \\
    &= (1-e^{-\lambda}) + (1-\frac{1}{D})e^{-\lambda} +
    \frac{1}{D}e^{-\lambda}\E{V^*}
    %
    \intertext{By Lemma~\ref{lemma:Evstar} in
      Appendix~\ref{appendix:technical},}
    &= 1 - \frac{e^{-\lambda}}{D} +
    \frac{e^{-\lambda}}{D}\cdot \frac{2m+1}{2m+2}
    \\
    &= 1 - \frac{e^{-\lambda}}{2D(m+1)}
  \end{align*}
  Iterating this argument we have that after $K$ splits the expected size of the
  first dimension of the cell containing $x$ is upper bounded by
  \begin{align*}
    \left(1 - \frac{e^{-\lambda}}{2D(m+1)}\right)^K
    \enspace, 
  \end{align*}
  so it suffices to have $K\to\infty$ in probability.  This is shown to hold by
  Proposition~\ref{prop:k-to-inf} in Appendix~\ref{appendix:technical}, which
  proves the claim.
\end{proof}


\begin{figure*}[t]
\centering
 {\includegraphics[width=0.24\linewidth]{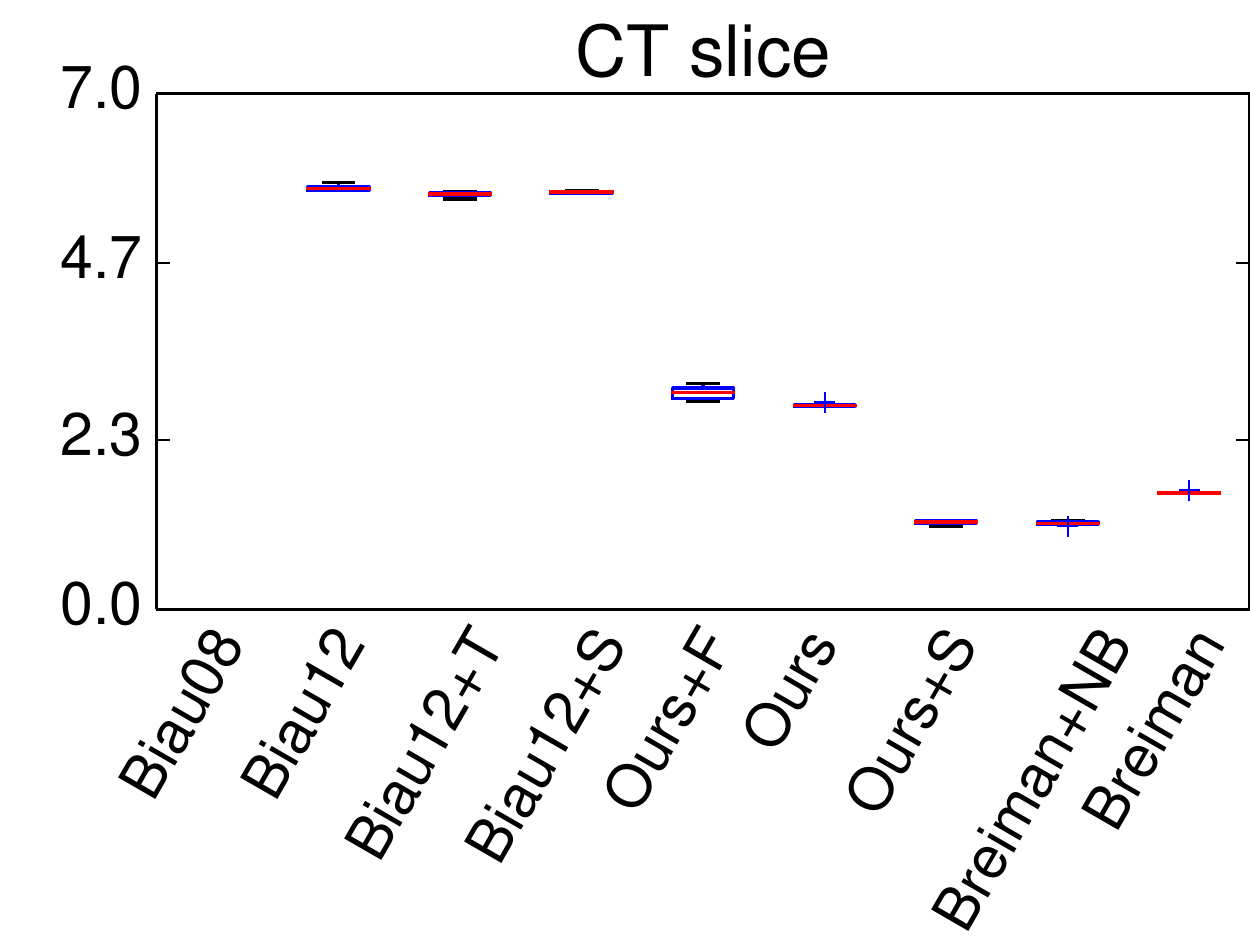}}
 {\includegraphics[width=0.24\linewidth]{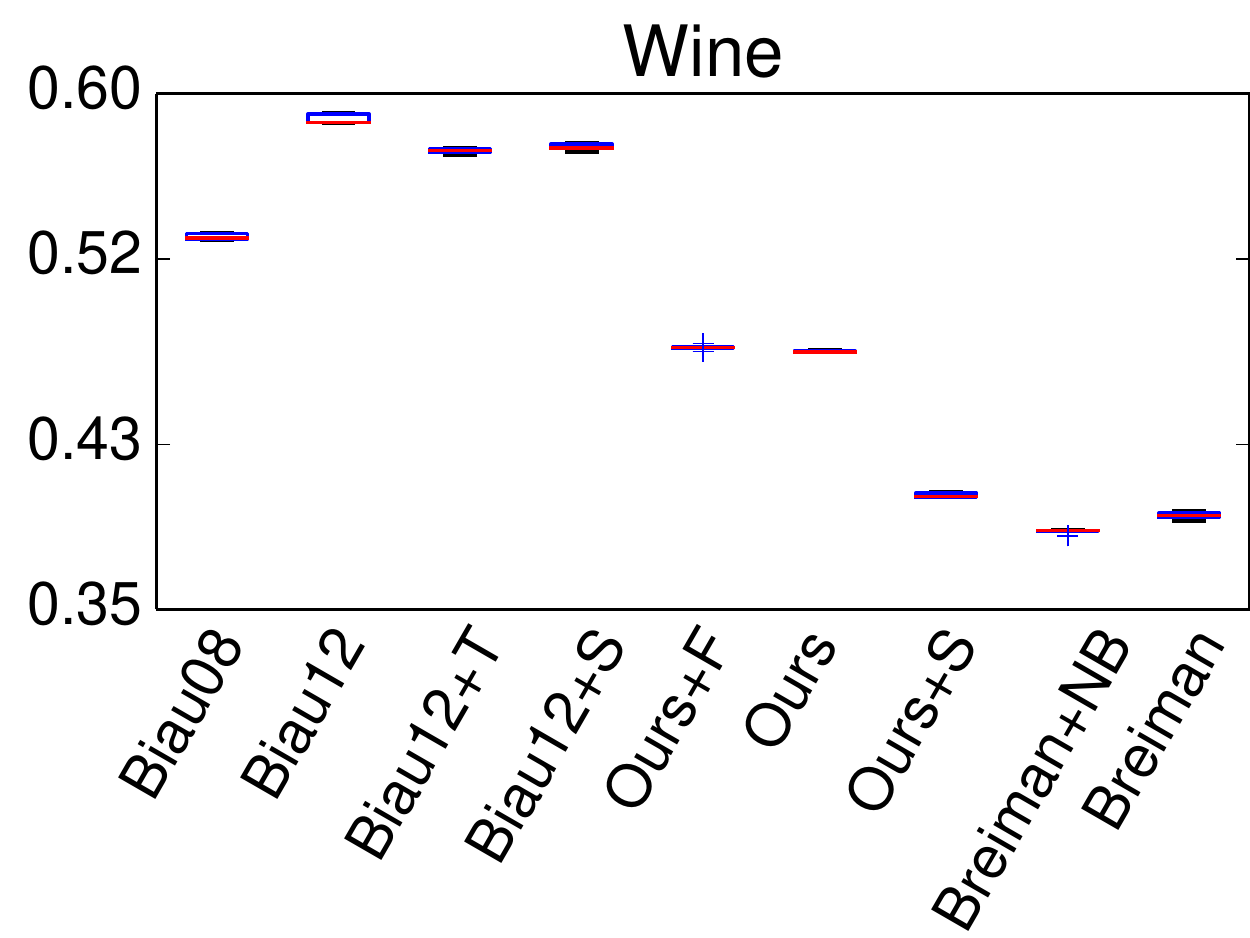}}
 {\includegraphics[width=0.24\linewidth]{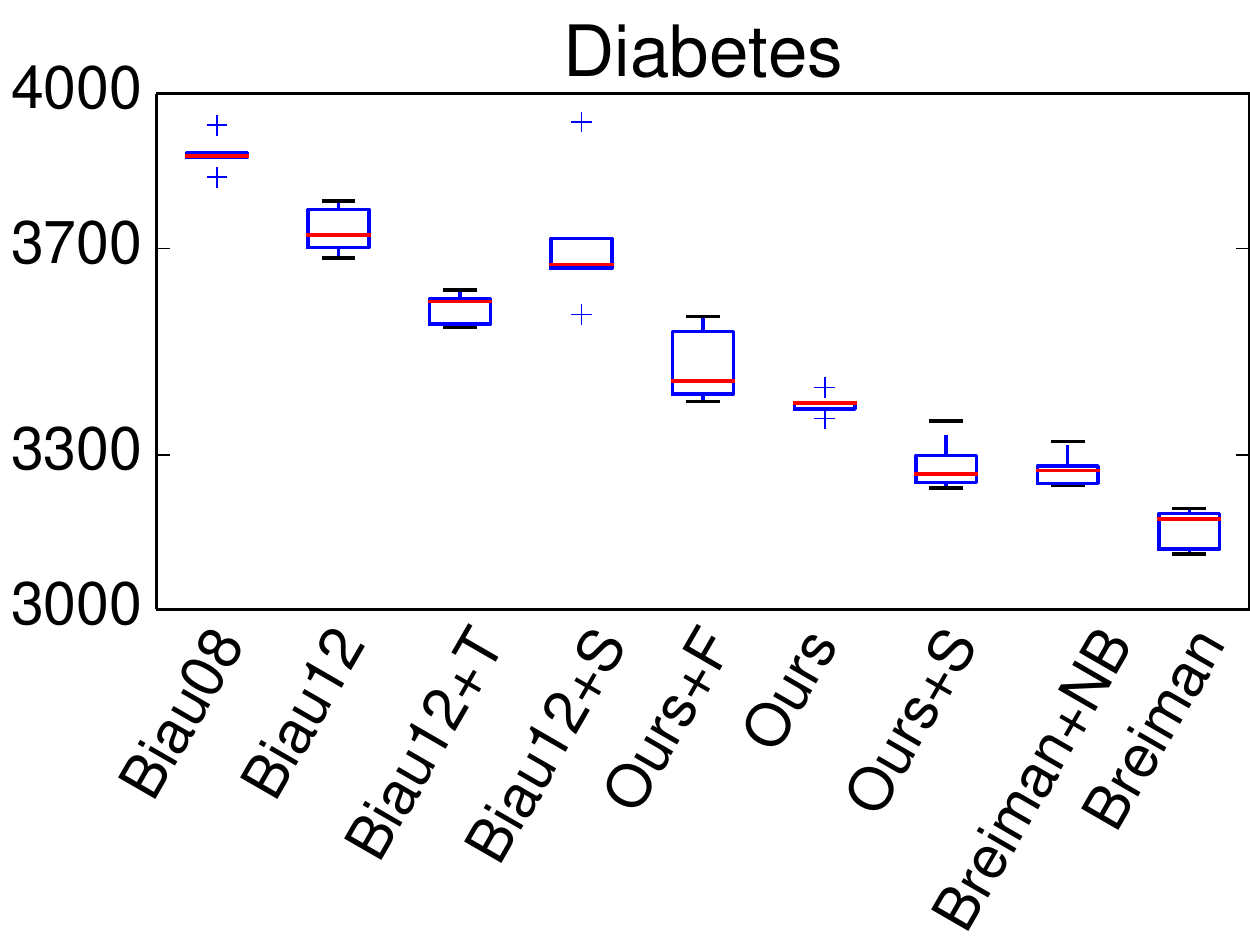}}
 {\includegraphics[width=0.24\linewidth]{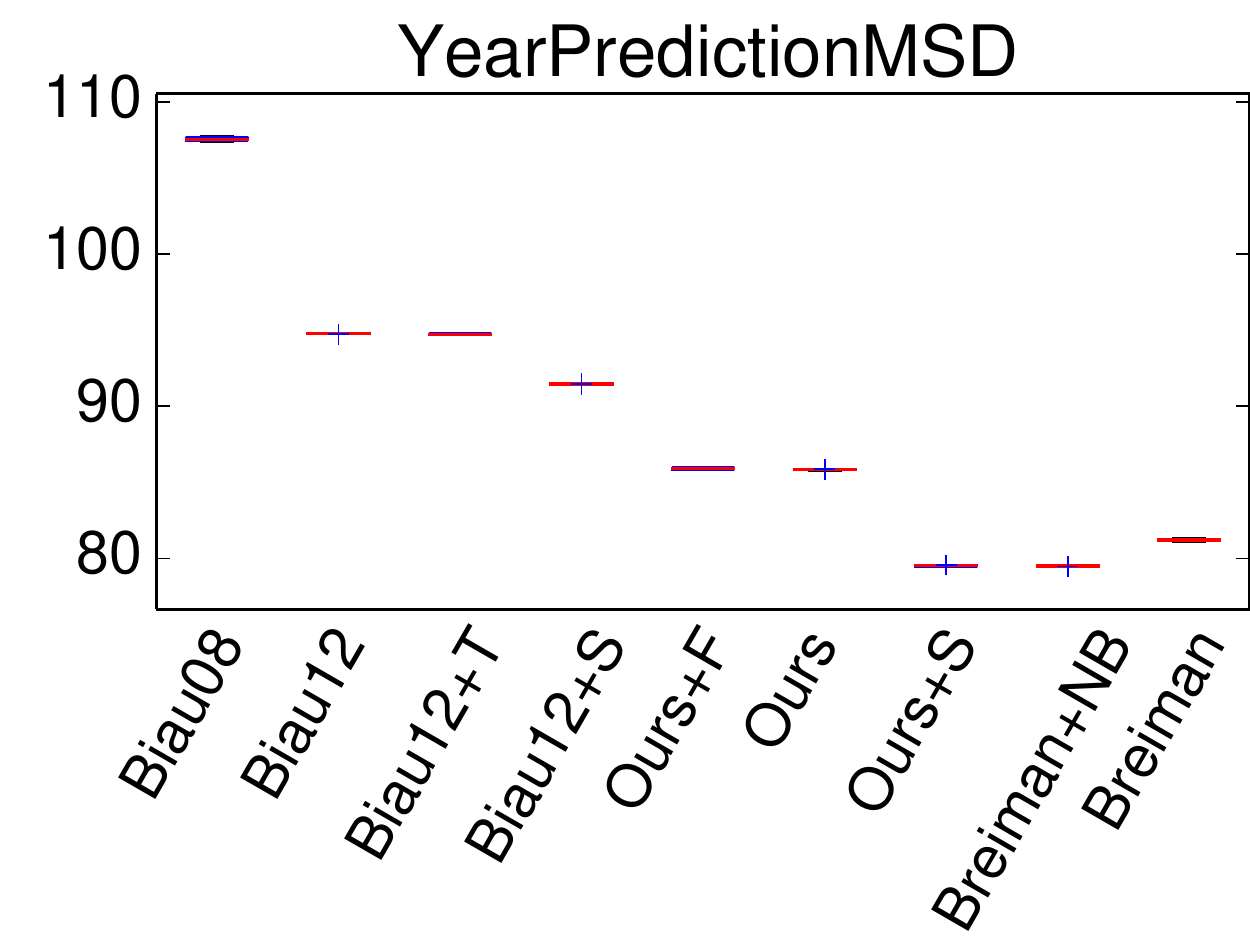}}
 \caption{Comparison between different algorithm permutations on several data
   sets.  In each plot the $y$-axis shows mean squared error, and different
   algorithms are shown along the $x$-axis.  The algorithm in this paper is
   labeled Ours.  Biau08 and Biau12 are algorithms from the literature, and are
   described in the main text.  Breiman is the original random forest algorithm.
   A + sign is used to indicate variants of an algorithm.  +T and +F indicate
   that data splitting is performed at the tree or forest level respectively,
   and +S indicates that no data splitting is used.  Breiman+NB is the original
   random forest algorithm with no bootstrapping.  In the CT slice figure the
   error of Biau08 is not shown, since it is extremely large.}
 \label{fig:uciboxplots}
\end{figure*}

\section{Discussion}
\label{sec:discussion}

In this section we describe two different random forest models which have been
previous analyzed in the literature.  We discuss some of the differences between
them and the model in this paper, and the relationship of the three models to
Breiman's original algorithm.  Both of the models we discuss here were
originally presented as classification algorithms, but adapting them for
regression is straightforward.

The first model we compare to our own is the scale invariant random forest from
\citet{biau08}, which we refer to as Biau08.  The trees in this forest are
constructed by repeatedly expanding leaf nodes as follows: a leaf in the tree is
chosen uniformly at random for expansion.  Within this leaf a dimension is
chosen uniformly at random and the data are sorted according to their projection
into the chosen dimension.  Finally, if there are $N$ data points in the leaf
being expanded then a random index $I$ is drawn from the set $\{0, 1, \ldots,
N\}$ and the split point is chosen so that the $I$ smallest values fall into one
of the children and the rest in the other.  Leaf expansion continues in this
manner until a specified number of terminal nodes has been reached.

The second model we compare to is the algorithm analyzed in \citet{Biau2012},
which we refer to as Biau12.  The trees in this forest assume the data is
supported on $[0,1]^D$, so data must first be scaled to lie in this range.
Trees are grown by expanding leafs in breadth first order until a specified
number of terminal nodes has been reached.  Leafs in this model are expanded by
selecting a fixed number of random candidate dimensions (with replacement).  For
each candidate dimension there is one candidate split point which lies at the
midpoint of the cell being expanded.  To choose between the different candidate
dimensions, the information gain from each split is computed and the candidate
split point with the greatest information gain is selected.

An important feature of Biau12 is that fitting the model requires partitioning
the data set into two parts.  One of these parts is used for determining the
structure of the trees, and the other part is used for fitting the estimators in
the leafs.  The roles of the two parts of this partition are identical to the
structure and estimation points in our own algorithm.  The main difference
between how Biau12 partitions the data and how we do so is that for Biau12
the partition into structure and estimation points is the same for all the trees
in the forest, whereas in our algorithm the partition is randomly chosen
independently for each tree.

Comparing our algorithm and the two from Biau to Breiman's original random
forests algorithm we see there are two key points of difference:
\begin{enumerate}
\item How candidate split points are chosen.
\item How data splitting happens (if at all).
\end{enumerate}

In our experiments we look at how different choices for these two factors effect
the performance of random forests on several regression problems.



\section{Experiments}

\begin{table}[b]
{\footnotesize
\begin{tabular}{p{3.2cm}p{1.9cm}p{1.9cm}}
    \hline
    Name & No. data & No. features \\ \hline
    Diabetes & 442 & 10 \\
    Wine Quality & 6497 & 11 \\ 
    YearPredictionMSD & 515345 & 90 \\  
    CT slice & 53500 & 384 \\ \hline
\end{tabular}
}
    \caption{Summary of UCI datasets.}
    \label{table:uci-datasets}
\end{table}

In this section we empirically compare our algorithm to Biau08 and Biau12
(described in Section~\ref{sec:discussion}) and Breiman (the original algorithm
described in \citet{Breiman2001:random_forests}) on several datasets.
%
%
The purpose of these experiments is to provide insight into the relative impact
of the different simplifications that have been used to obtain theoretical
tractability.  To this end we have chosen to evaluate the different algorithms
on several realistic tasks, including and extremely challenging joint prediction
problem from computer vision.


Since the algorithms are each parameterized slightly differently it is not
possible to use the same parameters for all of them.  Breiman and our own
algorithm specify a minimum leaf size, which we set to 5 following Breiman's
advice for regression \cite{Breiman2001:random_forests}.

Biau08 and Biau12 are parameterized in terms of a target number of leafs rather
than a minimum leaf size.  For these algorithms we choose the target number of
leafs to be $n/5$, meaning the trees will be approximately the same size as
those grown by Breiman and our own algorithm.

Biau12 requires the data to lie within the unit hypercube.  For this algorithm
we pre-process the data by shifting and scaling each feature into this range.

\subsection{UCI datasets}

\begin{figure}[tb]
\centering
  {\includegraphics[width=0.39\linewidth]{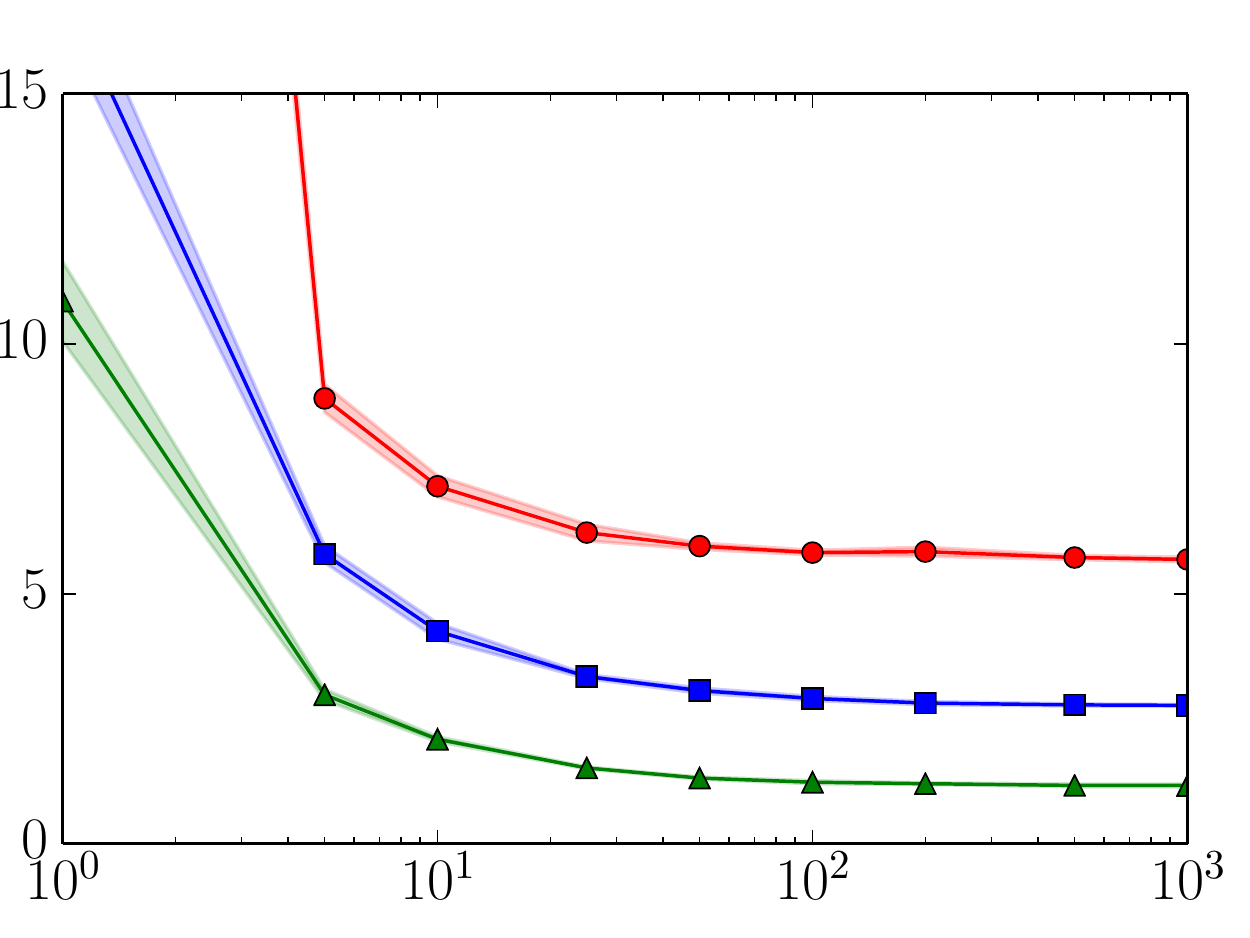}}
  {\includegraphics[width=0.59\linewidth]{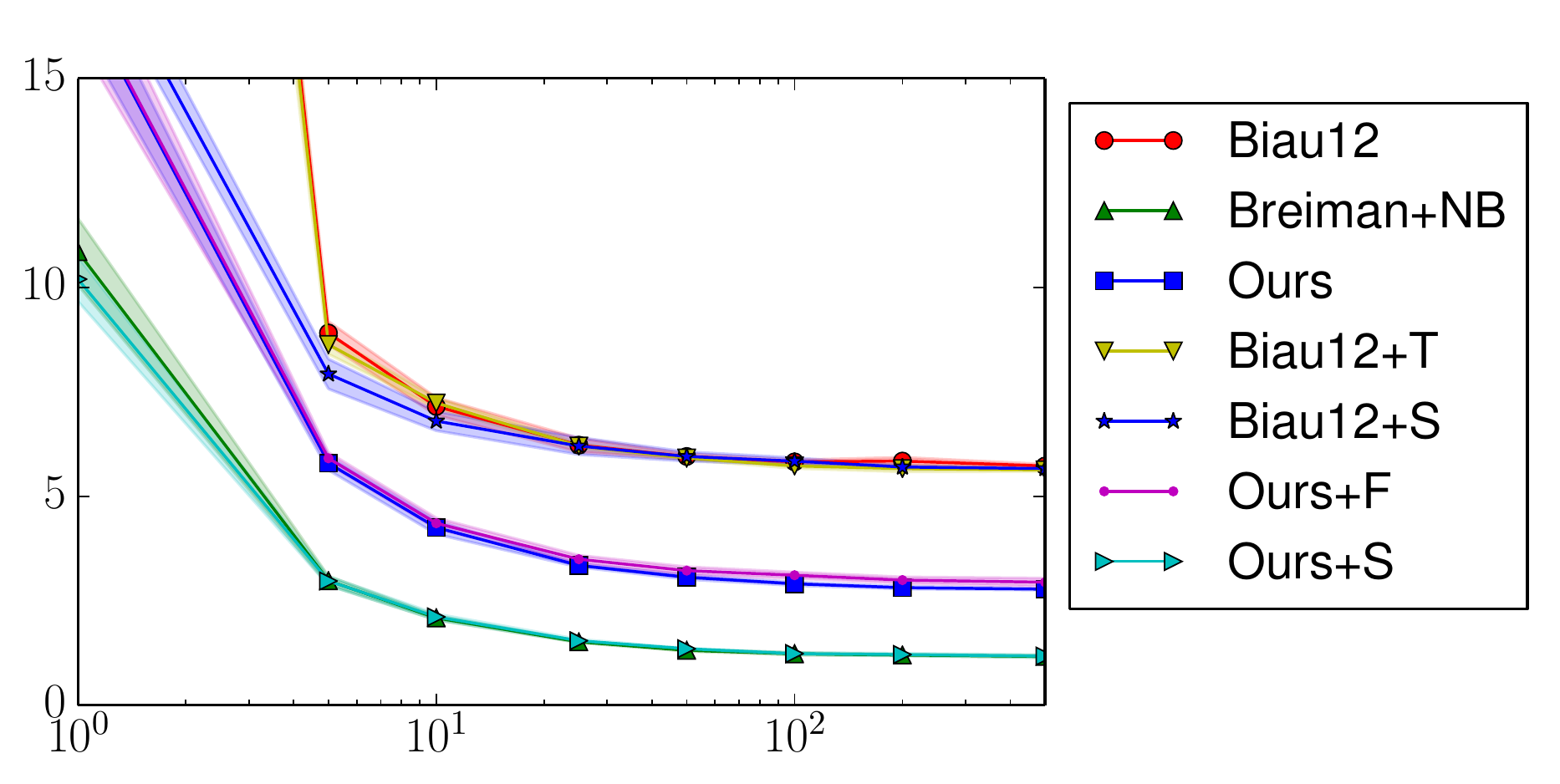}}
  \caption{\textbf{Left:} Performance comparison as a function of forest size
    for the different algorithms on the CT slice data set.  \textbf{Right:}
    Comparison between different methods of data splitting and split point
    selection on the CT slice dataset.  In both plots the $x$-axis number of
    trees and the $y$-axis is mean squared error.  Error bars show one standard
    deviation computed over five runs.  Biau08 does not appear in either plot
    since its error in this dataset is very large.  See the caption of
    Figure~\ref{fig:uciboxplots} for an explanation of the labels.}
  \label{fig:uciextra}
\end{figure}

For our first set of experiments we used four data sets from the UCI repository.
A summary of the datasets can be seen in Table~\ref{table:uci-datasets}.  With
the exception of diabetes, these datasets were chosen for their relatively large
number of instances and features.

In all the experiments in this section we follow Breiman's rule of thumb of
using one third of the total number of attributes as candidate dimensions.  All
results in the this section are the mean of five runs of five fold cross
validation.

For our algorithm we choose $m=1000$ structure points for selecting the search
range in the candidate dimensions.  We experimented with other settings for $m$
but found our results to be very insensitive to this parameter.

Figure~\ref{fig:uciboxplots} compares the performance of several different
random forest algorithm variants on the four UCI data sets.  The clear trend
here is that Breiman's algorithm outperforms our own, which in turn outperforms
both algorithms from Biau.  Generally Biau12 outperforms Biau08, except in the
wine quality data set where, strangely, the order is reversed. 

Figure~\ref{fig:uciboxplots} includes a variant of our algorithm which performs
data splitting at the forest level, and also a variant of Biau12 which performs
data splitting at the tree level.  This difference appears to have relatively
little effect when there is sufficient data; however, for the Diabetes dataset,
which is comparatively small, splitting at the tree instead of the forest level
significantly improves performance.

In all cases the gap between Biau12 and our algorithm is larger than the
difference in performance from changing how data splitting is done.  This
indicates that in a practical sense it is the split selection strategy that
accounts for most of the improvement of our algorithm over Biau12.

We also experimented with variants of Biau12 and our own algorithm with no data
splitting.  The most notable thing here is that when data splitting is removed
our algorithm is very competitive with Breiman.  This indicates that the gap in
performance between our algorithm and standard random forests can be contributed
almost entirely to data splitting.

We performed all of these experiments using a range of forest sizes.
Figure~\ref{fig:uciextra} (left) shows performance as a function of forest size.
In the interest of space we present this figure only for the CT slice dataset,
but the curves for the other datasets tell a similar story.  This figure shows
that the results from Figure~\ref{fig:uciboxplots} are consistent over a wide
range of forest sizes.

Figure~\ref{fig:uciextra} (right) more closely examines the effects of the
different data splitting and split point selection strategies.

\subsection{Kinect Pose Estimation}

\begin{figure}[t]
\centering
  {\includegraphics[width=0.2\linewidth]{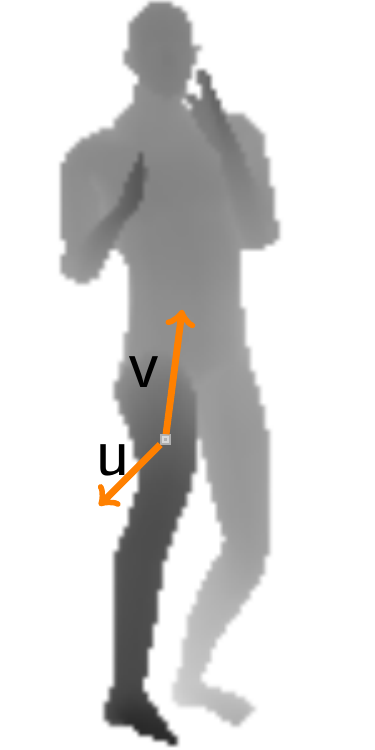}}
  \quad
  {\includegraphics[width=0.2\linewidth]{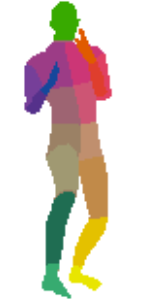}}
  \quad
  {\includegraphics[width=0.2\linewidth]{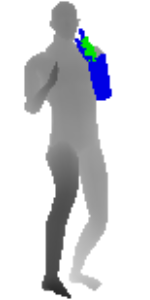}}
  \caption{\textbf{Left:} Depth image with a candidate feature specified by the
    offsets u and v.  \textbf{Center:} Body part labels.  \textbf{Right:} Left hand
    joint predictions (green) made by the appropriate class pixels (blue).}
  \label{fig:kinect-example}
\end{figure}

In this section, we evaluate our random forest algorithm on the challenging
computer vision problem of predicting the location of human joints from a depth
image and corresponding body part labels.  See Figure~\ref{fig:kinect-example}
for an example.

Typically the first step in a joint location pipeline is to predict the body part
labels of each pixel in the depth image and the second step is to use the
labelled pixels to predict joint locations \cite{Shotton2011}.

Since our primary goal is to evaluate regression models rather than to build an
end product, we implement only the second step in the basic pipeline.  Using
depth images with ground truth body part labels for each pixel as training data,
we learn a regression model of the offset from a pixel to a joint.

For each joint, we train a forest on the pixels of body parts associated
with that joint and predict the relative offset from each pixel to the joint.
Typically these errors would be post-processed with mean shift to find a more
accurate final prediction for the joint location.  We instead report the
regression error directly to avoid confounding factors in the comparison between the
forest models.

Each joint has its own model that predicts the offset from a pixel to the
location of the joint.  An offset is predicted for all pixels with body part
labels associated with a joint.

\begin{figure}[t]
  \centering
  \includegraphics[width=0.9\linewidth]{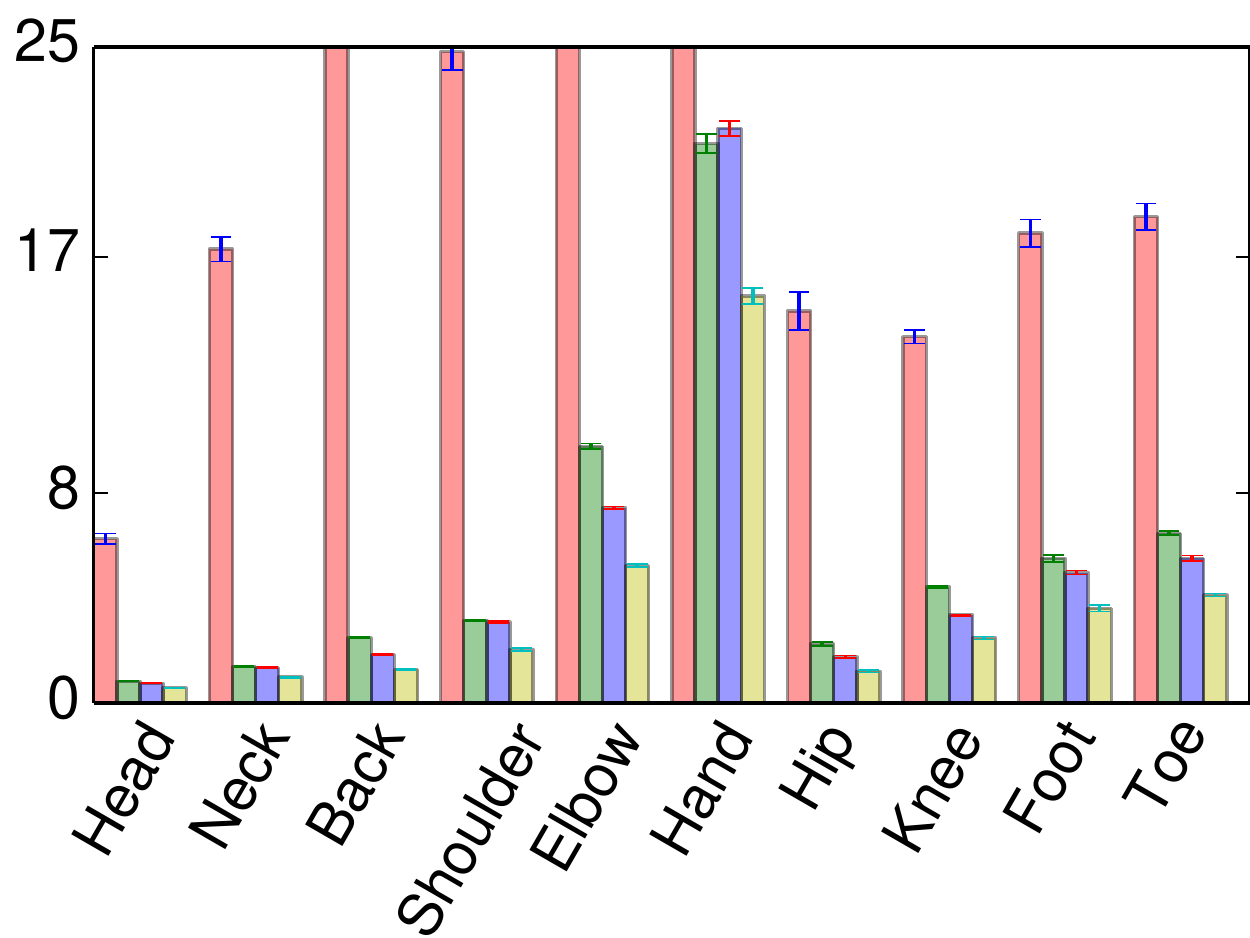}
  \caption{Mean squared error in pixel space for a forest of 50 trees on the
    kinect joint prediction task.  Each group of bars shows, from left to right,
    the error of Biau08, Biau12, Ours and Breiman.  The error bars show one
    standard deviation across 5 runs.  Due to space we only include the errors
    the left side of the body but the results for the right side are similar.
    In order to make the results legible the $y$-axis is set so that in some
    cases the error of Biau08 extends vertically off the figure.}
  \label{fig:kinect-error}
\end{figure}

To build our data set, we sample random poses from the CMU mocap
  dataset\footnote{Data obtained from \url{mocap.cs.cmu.edu}} and render a pair
of 320x240 resolution depth and body part images along with the positions of
each joint in the skeleton.  The 19 body parts and one background class are
represented by 20 unique color identifiers in the body part
image. 

For this experiment we generate 2000 poses for training and 500 poses for
testing.  To create the training set, we sample 20 pixels without replacement
from each body part class in each pose.  We then sample 40000 pixels without
replacement from the sampled pixels with the associated body part labels across
all poses.  During testing we evaluate the MSE of the offsets of all pixels
associated with a joint.  Figure~\ref{fig:kinect-example} visualizes the raw
depth image, ground truth body part labels and the votes for the left hand made
by all pixels in the left arm.

The features associated with each pixel are depth differences between pairs of
pixels at specified offsets from the target.  At training time, candidate pairs
of offsets are sampled from a 2-dimensional Gaussian distributions with variance
40.0 (chosen by cross validation). The offsets are scaled by the depth of the
target pixel to produce depth invariant
features. Figure~\ref{fig:kinect-example}~(left) shows candidate feature offsets
$u$ and $v$ for the indicated pixel. The resulting feature value is the depth
difference between the pixel at offset $u$ and the pixel at offset $v$.  In this
experiment we sample 1000 candidate offsets at each node.

Figure~\ref{fig:kinect-error} shows the MSE and standard deviation for each
joint in pixel units.  In the interest of space we only show the joints for the
left side of the body but we see similar results for the right side.  Just as
with the UCI datasets, the dominant ordering from largest to smallest test error
is Biau08, Biau12, ours and Breiman.

\section{Conclusion}

It is fascinating that an algorithm as simple and useful as random forests has
turned out to be so difficult to analyze. Motivated by this, we set as our goal
to narrow the gap between the theory and practice of regression forests, and we
succeeded to a significant extent. Specifically, we were able to derive a new
regression forest algorithm, to prove that it is consistent, and to show that
its empirical performance is closer to Breiman's popular model than previous
theoretical variants.

Our extensive empirical study, which compares the algorithm widely used in
practice to recent theoretical variants for the first time, also casts light on
how different design choices and theoretical simplifications impact performance.

We focused on consistency because this is still an important open
problem. However, we believe that our theoretical analysis and empirical study
help in setting the arena for embarking on other types of analyses, including
finite sample size complexity bounds, asymptotic convergence rates, and
consistency of random forests in machine learning problems beyond regression.




\clearpage
\small{
\subsubsection*{References}
\bibliography{randomforests}
\bibliographystyle{icml2014}
}

\appendix
\clearpage

\onecolumn
\section{Technical results}
\label{appendix:technical}

\begin{lemma}
  If $U_1, \ldots U_m$ are iid $\operatorname{Uniform}[0,1]$ random variables
  then
  \begin{align*}
    \E{\max(\max_{i=1}^m U_i, 1-\min_{i=1}^m U_i)} = \frac{2m+1}{2m+2}
  \end{align*}
  \label{lemma:Evstar}
\end{lemma}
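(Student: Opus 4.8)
The plan is to compute the expectation directly from the tail formula $\E{W} = \int_0^1 \P{W > t}\, dt$, where I write $W = \max(\max_{i} U_i, 1 - \min_i U_i)$ for brevity. The only structural fact I need at the outset is that $W$ is always at least $1/2$: since $\max_i U_i \ge \min_i U_i$, we have $\max_i U_i + (1 - \min_i U_i) \ge 1$, so at least one of the two arguments of the outer maximum exceeds $1/2$. Hence $\P{W > t} = 1$ for every $t < 1/2$, and the part of the integral over $[0,1/2)$ contributes exactly $1/2$.

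For the range $t \in [1/2, 1]$ I would identify the cumulative distribution function of $W$. The event $\{W \le t\}$ is the intersection $\{\max_i U_i \le t\} \cap \{1 - \min_i U_i \le t\}$, i.e.\ $\{\max_i U_i \le t\} \cap \{\min_i U_i \ge 1 - t\}$, which says precisely that every $U_i$ lies in the interval $[1-t, t]$. When $t \ge 1/2$ this interval is nonempty with length $2t - 1$, so by independence $\P{W \le t} = (2t-1)^m$ and therefore $\P{W > t} = 1 - (2t-1)^m$.

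Combining the two ranges,
\begin{align*}
  \E{W} &= \int_0^{1/2} 1\, dt + \int_{1/2}^1 \left(1 - (2t-1)^m\right)\, dt \\
        &= 1 - \int_{1/2}^1 (2t-1)^m\, dt,
\end{align*}
and the substitution $u = 2t - 1$ turns the remaining integral into $\frac{1}{2}\int_0^1 u^m\, du = \frac{1}{2(m+1)}$, giving $\E{W} = 1 - \frac{1}{2(m+1)} = \frac{2m+1}{2m+2}$, as claimed.

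The calculation is routine once the CDF is identified, so there is no serious obstacle; the one point that requires care is restricting attention to $t \ge 1/2$ when evaluating the tail, since for smaller $t$ the interval $[1-t,t]$ would be empty (the event $\{W \le t\}$ being genuinely impossible), and conflating the two regimes would spoil the bookkeeping.
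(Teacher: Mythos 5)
Your proof is correct and rests on the same key fact as the paper's: the quantity has CDF $(2t-1)^m$ on $[1/2,1]$, which the paper derives by noting that $\max(U_i,1-U_i)$ are iid $\operatorname{Uniform}[1/2,1]$ and you derive by observing that $\{W\le t\}$ is exactly the event that every $U_i$ lies in $[1-t,t]$. The only (minor) difference is the last step — you integrate the tail probability $\int_0^1 \P{W>t}\,dt$, while the paper differentiates the CDF and computes $\int_{1/2}^1 x\,f_M(x)\,dx$ — so this is essentially the same approach.
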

\begin{proof}
  Let $M_i = \max(U_i, 1-U_i)$, so $M_i$ are iid $\operatorname{Uniform}[1/2,
  1]$ with CDF given by
  \begin{align*}
    F_{M_i}(x) = 2x-1
  \end{align*}
  for $1/2 \le x \le 1$.  Moreover, if $M = \max_{i=1}^m M_i$ then $F_M(x) =
  (2x-1)^m$ since the $M_i$ are iid.  The density of $M$ is then
  \begin{align*}
    f_M(x) = \frac{\mathrm{d}}{\mathrm{d}x}F_M(x) = 2m(2x-1)^{m-1}
  \end{align*}
  and its expected value is
  \begin{align*}
    \E{M} = \int_{1/2}^1 x f_M(dx) = \frac{2m+1}{2m+2}
  \end{align*}
  which proves the claim.
\end{proof}


\begin{proposition}
  For sufficiently large $n$, every cell of the tree will be cut infinitely
  often in probability.  That is, if $K$ is the distance from the root of the
  tree to a leaf then $P(K<t) \to 0$ for all $t$ as $n\to\infty$.
  \label{prop:k-to-inf}
\end{proposition}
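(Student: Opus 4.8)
The plan is to track the cell $A_d(X)$ containing the query point $X$ as it descends the tree, and to show that for every fixed $t$ the path reaches depth $t$ with probability tending to $1$. I will argue by induction on $d = 0, \ldots, t-1$ that, with probability $\to 1$, the cell $A_d(X)$ contains enough estimation points to admit a valid split, so that the path continues to depth $d+1$. Since $t$ is fixed, a union bound over these finitely many steps then yields $\P{K \ge t} \to 1$, i.e.\ $\P{K < t} \to 0$, which is the claim.

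The key geometric input is the search-range restriction, which forces each split to lie within the range spanned by the $m$ structure points chosen to define it. Working in the normalized coordinates of the main theorem (uniform marginals on $[0,1]^D$), those range-defining points project to iid $\operatorname{Uniform}[0,1]$ variables along the split dimension $j$, so the split point $S$ satisfies $\min_i \pi_j X_i \le S \le \max_i \pi_j X_i$. Consequently the child containing $X$ retains a fraction of the parent's Lebesgue measure bounded below by $W_d := \min(\min_i \pi_j X_i,\, 1 - \max_i \pi_j X_i)$, a strictly positive random variable (a function of order statistics of $m$ uniforms) whose distribution does not depend on $n$. Writing $\mu_d$ for the measure of $A_d(X)$, iterating gives $\mu_d \ge \prod_{i=1}^d W_i > 0$ almost surely, a lower bound whose law is independent of $n$.

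Next I relate this measure to the estimation count. Since the tree shape is a function of the structure points alone while the estimation points are drawn independently and uniformly, conditioning on the structure points (and on $X$) makes $N^e(A_d(X))$ a sum of independent indicators, distributed as $\operatorname{Binomial}(|E|, \mu_d)$ with $|E| \approx n/2$. A Chernoff bound then gives $N^e(A_d(X)) \ge \tfrac{n}{4}\mu_d$ with high probability, and because $\mu_d$ is bounded below by a fixed positive random variable while $k_n/n \to 0$, this forces $N^e(A_d(X)) \ge 2k_n$ for all large $n$. With at least $2k_n$ estimation points in a cell, a valid split exists: splitting at a central range-defining structure point divides the estimation mass into two pieces each of size $\approx N^e/2 \ge k_n$ with high probability (again by concentration, which is comfortable since $N^e = \Theta(n) \gg k_n$), so the algorithm does not halt at $A_d(X)$.

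Assembling these steps by induction and a union bound over the $t$ levels completes the argument. The main obstacle is the second and third steps taken together: establishing the measure lower bound $\mu_d \ge \prod_i W_i$ cleanly and then transferring it to a lower bound on the estimation count. This requires the search-range restriction (the feature that distinguishes this algorithm and makes the retained-measure fractions non-degenerate order statistics rather than arbitrarily small), together with the conditional independence of the estimation points from the tree shape; the hypothesis $k_n/n \to 0$ enters precisely here, guaranteeing that cells at any fixed depth remain data-rich enough to be split.
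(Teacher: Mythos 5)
Your proposal is correct and takes essentially the same approach as the paper's proof: both use the order statistics of the $m$ range-defining structure points to lower-bound the fraction of measure retained by each child by a positive random variable whose law does not depend on $n$, and then apply binomial concentration (Hoeffding/Chernoff) together with $k_n/n \to 0$ to conclude that cells at any fixed depth contain enough estimation points for the stopping rule never to fire. The differences are cosmetic: you track only the cell containing the query point and exhibit an explicit valid candidate split (the median range-defining point), whereas the paper bounds all cells at depth $K$ simultaneously via a $\delta$-hypercube argument and lets any candidate split be valid.
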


\begin{proof}
  The splitting mechanism functions by choosing $m$ structure points uniformly
  at random from the node to be split and searching between their min and max.
  We will refer to the points selected by the splitting mechanism as active.
  Without loss of generality we can assume the active points are uniformly
  distributed on $[0, 1]$ and lower bound the number of estimation points in the
  smallest child.

  Denote the active points $U_1, \ldots, U_m$ and let $U = \max_{i=1}^m(\max(U_i,
  1-U_i))$.  We know from the calculations in Lemma~\ref{lemma:Evstar} that
  \begin{align*}
    \P{U \le t} = (2t-1)^m
  \end{align*}
  which means that the length of the smallest child is at least $\delta^{1/K} < 1$
  with probability $(2(1-\delta^{1/K})-1)^m$, i.e.\
  \begin{align*}
    \P{U \le 1-\delta^{1/K}} = (2(1-\delta^{1/K})-1)^m
  \end{align*}
  Repeating this argument $K$ times we have that after $K$ splits all sides of
  all children have length at least $\delta$ with probability at least
  $(2(1-\delta^{1/K})-1)^{Km}$.  This bound is derived by assuming that the same
  dimension is cut at each level of the tree. If different dimensions are cut at
  different levels the probability that all sides have length at least $\delta$
  is greater, so the bound holds in those cases also.

  This argument shows that every cell at depth $K$ contains a hypercube with
  sides of length $\delta$ with probability at least
  $(2(1-\delta^{1/K})-1)^{Km}$.  Thus for any $K$ and $\epsilon_1>0$ we can pick
  $\delta$ such that
  \begin{align*}
    0 < \delta^{1/K} &\le 1-\frac{1}{2}((1-\epsilon_1)^{1/Km}+1)
  \end{align*}
  and know that every cell of depth $K$ contains a hypercube with sides of
  length $\delta$ with probability at least $1-\epsilon_1$.  Since the
  distribution of $X$ has a non-zero density, each of these hypercubes has
  positive measure with respect to $\mu_X$.  Define
  \begin{align*}
    p = \min_{L \text{ a leaf at depth } K}\mu_X(L)
    \enspace.
  \end{align*}
  We know $p > 0$ since the minimum is over finitely many leafs and each leaf
  contains a set of positive measure.

  It remains to show that we can choose $n$ large enough so that any set $A
  \subset [0, 1]^D$ with $\mu_X(A) \ge p$ contains at least $k_n$ estimation
  points.  To this end, fix an arbitrary $A\subset [0,1]^D$ with $\mu_X(A) = p$.
  In a data set of size $n$ the number of points which fall in $A$ is
  $\operatorname{Binomial}(n, p)$.  Each point is an estimation point with
  probability $1/2$, meaning that the number of estimation points, $E_n$, in $A$
  is $\operatorname{Binomial}(n,p/2)$.


  Using Hoeffding's inequality we can bound $E_n$ as follows
  \begin{align*}
    \P{E_n < k_n} &\le \exp{-\frac{2}{n} \left(\frac{np}{2} - k_n \right)^2}
    \le \exp{(k_n-\frac{np}{2})p}
    \enspace.
  \end{align*}
  For this probability to be upper bounded by an arbitrary $\epsilon_2>0$ it is
  sufficient to have
  \begin{align*}
    \frac{k_n}{n} &\le \frac{p}{2} - \frac{1}{np}\log(\frac{1}{\epsilon_2})
    \enspace.
  \end{align*}
  The second term goes to zero as $n \to \infty$ so for sufficiently large $n$
  the RHS is positive and since $k_n/n \to 0$ it is always possible to choose
  $n$ to satisfy this inequality.

  In summary, we have shown that if a branch of the tree is grown to depth $K$
  then the leaf at the end of this branch contains a set of positive measure
  with respect to $\mu_X$ with arbitrarily high probability.  Moreover, we have
  shown that if $n$ is sufficiently large this leaf will contain at least $k_n$
  estimation points.

  The only condition which causes our algorithm to terminate leaf expansion is
  if it is not possible to create child leafs with at least $k_n$ points.  Since
  we can make the probability that any leaf at depth $K$ contains at least this
  many points arbitrarily high, we conclude that by making $n$ large we can make
  the probability that all branches are actually grown to depth at least $K$ by
  our algorithm arbitrarily high as well.  Since this argument holds for any $K$
  the claim is shown.
\end{proof}



\end{document}